\documentclass{article} 
\usepackage{iclr2026_delta,times}

\usepackage{amsmath,amssymb,amsthm}
\usepackage{mathtools}
\usepackage{bbm}
\usepackage{dsfont}

\usepackage{graphicx}
\usepackage{booktabs,array}
\usepackage{multirow}
\usepackage{wrapfig}
\usepackage{placeins}

\usepackage{tikz}
\usetikzlibrary{tikzmark}

\usepackage{enumitem}

\usepackage{algorithm}
\usepackage{algorithmic}
\usepackage{eqparbox}

\usepackage{listings}
\usepackage{fancyvrb}
\fvset{fontsize=\small}

\usepackage[dvipsnames]{xcolor}
\definecolor{darkpink}{RGB}{199,21,140}

\newcommand{\bss}{\boldsymbol{s}}

\newcommand{\bsw}{\boldsymbol{w}}
\newcommand{\bsx}{\boldsymbol{x}}
\newcommand{\bsy}{\boldsymbol{y}}
\newcommand{\bsz}{\boldsymbol{z}}

\newcommand{\calS}{{\mathcal{S}}}

\newcommand{\calX}{{\mathcal{X}}}

\newcommand{\bmu}{{\boldsymbol{\mu}}}

\newcommand{\bSigma}{\boldsymbol{\Sigma}}

\theoremstyle{plain}
\newtheorem{thm}{Theorem}[section]

\theoremstyle{definition}

\theoremstyle{remark}

\DeclareMathOperator*{\argmin}{arg\,min}

\newcommand{\normal}{\mathcal{N}}

\def\(#1\){\begin{equation}\begin{aligned}#1\end{aligned}\end{equation}}

\def\sR{{\mathbb{R}}}

\newcommand{\E}{\mathbb{E}}

\newcommand{\del}{\partial}

\usepackage{hyperref}        
\usepackage[nameinlink,capitalise,noabbrev]{cleveref} 

\theoremstyle{plain}

\theoremstyle{definition}

\theoremstyle{remark}

\crefname{thm}{Theorem}{Theorems}

\usepackage{hyperref}
\usepackage{url}
\iclrfinalcopy

\title{Permutation-Symmetrized Diffusion for \\ Unconditional Molecular Generation}

\author{Gyeonghoon Ko, Juho Lee \\
Korea Advanced Institute of Science and Technology \\
\texttt{\{kog, juholee\}@kaist.ac.kr}
}

\begin{document}

\maketitle

\begin{abstract}
Permutation invariance is fundamental in molecular point-cloud generation, yet most diffusion models enforce it indirectly via permutation-equivariant networks on an ordered space. We propose to model diffusion directly on the quotient manifold $\tilde{\calX}=\sR^{d\times N}/S_N$, where all atom permutations are identified. We show that the heat kernel on $\tilde{\calX}$ admits an explicit expression as a sum of Euclidean heat kernels over permutations, which clarifies how diffusion on the quotient differs from ordered-particle diffusion. Training requires a permutation-symmetrized score involving an intractable sum over $S_N$; we derive an expectation form over a posterior on permutations and approximate it using MCMC in permutation space. We evaluate on unconditional 3D molecule generation on QM9 under the EQGAT-Diff protocol, using SemlaFlow-style backbone and treating all variables continuously. The results demonstrate that quotient-based permutation symmetrization is practical and yields competitive generation quality with improved efficiency.
\end{abstract}

\section{Introduction}

Diffusion models provide a flexible framework for generative modeling and have recently achieved strong performance for 3D molecular generation when combined with equivariant architectures \citep{hoogeboom2022equivariant,dunn2024mixed,vignac2023midi,le2023navigating}. A key symmetry of molecules is permutation invariance: relabeling atoms does not change the underlying structure. Most diffusion approaches address this by representing molecules in an ordered space $\calX=\sR^{d\times N}$ and enforcing permutation \emph{equivariance} of both the transition kernels and the score network \citep{zaheer2017deep}. This guarantees that permuting the input permutes the output in the same way, but the diffusion trajectory still evolves in an ordered representation that implicitly tracks particle identities.

We instead formulate diffusion directly on the quotient space $\tilde{\calX}=\sR^{d\times N}/S_N$, where all permutations are identified. We show that the heat kernel on $\tilde{\calX}$ admits an explicit expression as a sum of Euclidean heat kernels over permutations. This characterization highlights a qualitatively different behavior: during noising and denoising, particle identities may be effectively exchanged. The resulting permutation-symmetrized score can be written as an expectation under a posterior distribution over permutations, which we approximate via MCMC. Related symmetrized diffusion ideas were introduced in \citet{anon2026riemannian}, but their approach additionally requires training a physics-informed neural network (PINN). We evaluate on unconditional QM9 generation \citep{ramakrishnan2014quantum} under the EQGAT-Diff framework \citep{le2023navigating}, using a lightweight SemlaFlow-style backbone \citep{irwin2024semlaflow} and treating all variables continuously.

\section{Background}

\subsection{Diffusion models on Riemannian Manifold}

Consider an ambient space $\calX$ in which the data lie. A \textit{noising process} or \textit{forward process} is a stochastic process $\{\bsx^{(t)}\}_{t=0}^T$ starting from $\bsx^{(0)} \sim p_0 = p_{\text{data}}$, usually defined by a stochastic differential equation (SDE) adding noises, with transition kernel $p_{t|s}(\bsx^{(t)} | \bsx^{(s)})$ for $0 \leq s < t \leq T$ and marginal distribution $p_t(\bsx^{(t)}) = \int p_{t|0}(\bsx^{(t)} | \bsx^{(0)}) p_{\text{data}}(\bsx^{(0)}) d\bsx^{(0)}$. A \textit{denoising process} or \textit{reverse process} is a stochastic process designed to reverse the noising process, starting from the noise distribution $q_T(\bsx^{(T)}) \approx p_T$ at time $T$ and aiming to recover the original data distribution $p_\text{data}(\bsx^{(0)})$. The reverse process is also often characterized by a (reverse-time) SDE, with a learnable transition kernel $q_{\theta, s|t}(\bsx^{(s)}|\bsx^{(t)})$ for $s<t$, parametrized by a model parameter $\theta$.

For $\calX=\sR^d$, we use the Ornstein--Uhlenbeck (OU) forward SDE \citep{song2020score}
\begin{equation}\label{eqn:forward-ou}
    d\bsx^{(t)}=-\tfrac12 \bsx^{(t)}dt+d\bsw^{(t)},
\end{equation}
with closed-form Gaussian transitions $p_{t|s}(\bsy\mid\bsx)=\normal(\bsy\mid \bmu_{t|s}(\bsx),\bSigma_{t|s})$ and marginals $p_t$. The associated reverse-time SDE is
\begin{equation}\label{eqn:reverse-ou}
    d\bsx^{(t)}=\Big[-\tfrac12 \bsx^{(t)}-\nabla_{\bsx^{(t)}}\log p_t(\bsx^{(t)})\Big]dt+d\bsw^{(t)}.
\end{equation}
We learn the score with a network $\bss_\theta(\bsx,t)\approx\nabla_{\bsx}\log p_t(\bsx)$ via denoising score matching,
\begin{equation}
\theta^*=\argmin_\theta \E_{t,\bsx^{(0)}}\Big\|\bss_\theta(\bsx^{(t)},t)-\nabla_{\bsx^{(t)}}\log p_t(\bsx^{(t)}\mid \bsx^{(0)})\Big\|^2,
\end{equation}
using the closed-form conditional score of the OU kernel. In practice we may use time-inhomogeneous variants and loss reweighting.

The noising and denoising processes can also be defined well in a non-Euclidean manifold, if the manifold $\calX$ is granted with a Riemannian metric. The Wiener process $\bsw^{(t)}$ is replaced by its Riemannian version $\bsw^{(t)}_\calX$, derived from the Laplace-Beltrami operator $\Delta^{\calX}$ of $\calX$ as an infinitesimal generator. Similar to the Euclidean case, one can define a forward SDE and its reverse
\begin{equation}
    d\bsx^{(t)} = b(\bsx^{(t)})dt + d\bsw^{(t)}_\calX, \quad  d\bsx^{(t)} = - [ b(\bsx^{(t)}) + \nabla_{\bsx^{(t)}} \log p_t(\bsx^{(t)}) ] dt + d\bsw^{(t)}_\calX 
\end{equation}
where $b(\bsx)$ is a tangent vector field on $\calX$. However, unlike the Euclidean case, the closed-form equations for transition kernels and marginal densities are unknown in general, which becomes one of the major challenges of Riemannian diffusion models.

For later use, we also mention that having closed-form expressions for forward transition kernels is possible since the heat kernel $K(t, \bsx,\bsy)$, i.e. the function satisfying $\frac{\del}{\del t}K = \Delta_{\bsy} {K}$ with $\lim_{t \rightarrow 0} K = \delta(\bsx-\bsy)$, is given by the Gaussian
\begin{equation}
    K(t,\bsx,\bsy) = \frac{1}{(4 \pi t)^{d/2}}\exp( -\frac{|\bsx-\bsy|^2}{4t}).
\end{equation}
For the Riemannian manifold $\calX$, the heat kernel $K^{\calX}$ is defined similarly with the Laplacian $\Delta$ replaced by the Laplace-Beltrami operator $\Delta^{\calX}$. The exact form of the heat kernel is unknown in general, and understanding them is a crucial component in modern mathematics.

\section{Method}

\subsection{Rethinking permutation symmetry in diffusion models}

In molecule generation tasks, we usually consider a diffusion process on the space of point clouds $\calX = \sR^{d\times N} = \{\bsx= (x_1,\cdots,x_N) ~|~ x_i \in \sR^N \text{ for } i =1,\cdots,N\}$, with the space $\sR^d$ consisting of coordinates and atom features, e.g. atom types. The molecules $\bsx = (x_1,\cdots,x_N)$ have a permutation symmetry defined by the permutation group $S_N$ which acts by permuting the points as $\sigma(\bsx) = (x_{\sigma^{-1}(1)},\cdots,x_{\sigma^{-1}(N)})$. A common recipe for incorporating the  permutation symmetry is designing a permutation-equivariant transition kernel:
\begin{align}
    p_{t|s}(\bsx^{(t)} | \bsx^{(s)}) = p_{t|s}(\sigma(\bsx^{(t)}) | \sigma(\bsx^{(s)})), \quad
    q_{\theta, s|t}(\bsx^{(s)} | \bsx^{(t)}) = q_{\theta, s|t}(\sigma(\bsx^{(s)}) | \sigma(\bsx^{(t)}))
\end{align}
for all $s<t$ and $\sigma \in S_N$. In this formulation, the equivariance is achieved by using a permutation equivariant diffusion kernel (forward) and a permutation equivariant architecture (reverse). This formulation implies that if the input is altered by a permutation, then the output is altered accordingly by the same permutation, i.e. \textbf{the process behaves identically} under permutation.

On the other hand, we can think of a stronger notion of utilizing the symmetry. Instead of building an equivariant diffusion on $\calX = \sR^{d\times N}$, we model the diffusion process on the quotient manifold $\tilde{\calX} = \sR^{d\times N} / S_N$. Let $\pi : {\calX} = \sR^{d\times N} \rightarrow \tilde{\calX} = \sR^{d\times N} / S_N $ denote the quotient map, and let's write $\tilde{\bsx} = \pi(\bsx)$ for brevity. In this formulation, we have simply $\tilde{\sigma(\bsx)} = \tilde{\bsx}$, i.e. \textbf{the elements are identical} under permutation. The manifold $\tilde{\calX} = \sR^{d\times N} / S_N$ inherits the Euclidean metric, and hence has a well-defined heat kernel. The exact form of the heat kernel is given by \cref{thm:perm-heat}:
\begin{thm}\label{thm:perm-heat}
    The heat kernel $K^{\tilde{\calX}}(t,\tilde{\bsx},\tilde{\bsy})$ associated to the quotient manifold $\tilde{\calX} = \sR^{d\times N} / S_N$ is given by, for $\tilde{\bsx},\tilde{\bsy} \in \tilde{\calX}$,
    \begin{align}
        &K^{\tilde{\calX}}(t,\tilde{\bsx}, \tilde{\bsy})  \nonumber \\
        &\quad = \frac{1}{(4\pi t)^{dN/2}} \sum_{\sigma\in S_N}  \exp(-\frac{|\bsx - \sigma(\bsy)|^2}{4t})  \nonumber 
        = \frac{1}{(4\pi t)^{dN/2}} \sum_{\sigma\in S_N} \prod_{i=1}^N \exp(-\frac{(x_i - y_{\sigma^{-1}(i)})^2}{4t}),
    \end{align}
    i.e. the heat kernel is simply  the sum of the Euclidean heat kernels over all permutations.
\end{thm}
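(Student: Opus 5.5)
The approach is the method of images: build the kernel on $\calX=\sR^{d\times N}$ by symmetrizing the Euclidean heat kernel over $S_N$, check that it descends to $\tilde{\calX}$, verify the heat equation and the initial condition there, and conclude by uniqueness. Let $K(t,\bsx,\bsy)=(4\pi t)^{-dN/2}\exp(-|\bsx-\bsy|^2/4t)$ be the Euclidean heat kernel on $\calX$, and set
\begin{equation*}
F(t,\bsx,\bsy)=\sum_{\sigma\in S_N}K(t,\bsx,\sigma(\bsy)).
\end{equation*}
The first step is to check that $F$ depends only on $(\tilde{\bsx},\tilde{\bsy})$. Each $\sigma$ acts as an orthogonal map on $\calX$, since it only permutes blocks of coordinates. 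Hence $|\tau(\bsx)-\sigma(\bsy)|=|\bsx-\tau^{-1}\sigma(\bsy)|$, and reindexing the sum over the group gives $F(t,\tau\bsx,\bsy)=F(t,\bsx,\bsy)=F(t,\bsx,\tau\bsy)$. So $F$ defines a function $K^{\tilde{\calX}}$ on $(0,\infty)\times\tilde{\calX}\times\tilde{\calX}$. The second displayed form in the statement then follows by expanding $|\bsx-\sigma(\bsy)|^2=\sum_i |x_i-y_{\sigma^{-1}(i)}|^2$.

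The second step is the heat equation. The quotient $\pi$ is a local isometry on the open dense set where all $x_i$ are distinct, which is where $\tilde{\calX}$ is a genuine Riemannian manifold. On that set $\Delta^{\tilde{\calX}}$ acting on $S_N$-invariant functions coincides with the Euclidean Laplacian on lifts. Each summand $K(t,\bsx,\sigma(\bsy))$ solves $\partial_t=\Delta_{\bsy}$, because $\bsy\mapsto\sigma(\bsy)$ is an isometry and commutes with $\Delta$. By linearity $F$ solves the heat equation, and so does $K^{\tilde{\calX}}$.

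The third step is the initial condition. Take a test function $\tilde f$ on $\tilde{\calX}$ and lift it to an $S_N$-invariant $f$ on $\calX$. Integrating over $\tilde{\calX}$ with the quotient measure amounts to integrating over $\calX$ and dividing by $N!$, since $\pi$ is $N!$-to-one off a null set. This gives
\begin{equation*}
\int_{\tilde{\calX}}K^{\tilde{\calX}}(t,\tilde{\bsx},\tilde{\bsy})\tilde f(\tilde{\bsy})\,d\tilde{\bsy}=\frac{1}{N!}\sum_{\sigma}\int_{\calX}K(t,\bsx,\sigma\bsy)f(\bsy)\,d\bsy=\int_{\calX}K(t,\bsx,\bsy)f(\bsy)\,d\bsy\to f(\bsx)=\tilde f(\tilde{\bsx}),
\end{equation*}
where the middle equality substitutes $\bsy\mapsto\sigma^{-1}\bsy$ and uses invariance of $f$. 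So $K^{\tilde{\calX}}(t,\tilde{\bsx},\cdot)\to\delta_{\tilde{\bsx}}$ as $t\to0$.

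The main obstacle is uniqueness and the singular set. $\tilde{\calX}$ is an orbifold: it is not a smooth manifold where points collide, and it is not compact. Two things must be pinned down.

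\emph{What "the" heat kernel means.} I would take it to be the minimal heat kernel, equivalently the kernel of $e^{t\Delta}$ for the Friedrichs extension. Invariant $L^2$ functions on $\calX$ correspond isometrically (up to the factor $N!$) to $L^2(\tilde{\calX})$. Under this correspondence, $S_N$-invariant $H^1$ functions correspond to $H^1(\tilde{\calX})$, because the collision set has codimension $d\ge1$ and is therefore invisible to the quadratic form.

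\emph{Why the two semigroups agree.} The Euclidean semigroup $e^{t\Delta}$ commutes with the $S_N$-action, so it preserves invariant functions. It therefore restricts to the invariant subspace, where it is exactly $e^{t\Delta^{\tilde{\calX}}}$. Its kernel on the quotient is obtained by the averaging computation above, which produces $F$. This $L^2$ intertwining argument sidesteps any pointwise analysis at the collision set, so I would present it as the core of the proof and treat the PDE checks in the second and third steps as consistency verifications.
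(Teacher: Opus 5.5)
Your Steps 1--3 are the paper's proof almost line for line: descent by reindexing over $S_N$, each image term solving the heat equation, and the $1/|S_N|$ averaging computation for the initial condition. The paper then simply invokes ``uniqueness of the heat kernel.'' You rightly see that this appeal is the real issue on the singular, incomplete quotient, and an $L^2$ intertwining argument is the right tool. But the claim it rests on is false as stated: a collision set of codimension $d$ is invisible to the quadratic form only when $d\ge 2$.

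For $d\ge 2$ the collision set has zero capacity. Then $C_c^\infty$ of the regular part is dense in $S_N$-invariant $H^1(\calX)$ (average over $S_N$), and the Friedrichs extension from the regular part coincides with the Euclidean Laplacian restricted to invariant functions. For $d=1$ the walls $\{x_i=x_j\}$ are hyperplanes of positive capacity. The minimal heat kernel of the open chamber $\{x_1<\cdots<x_N\}$ is the Friedrichs extension from $C_c^\infty$ of the regular part, which is what your codimension argument presupposes. That kernel is the Dirichlet (Karlin--McGregor) kernel $(4\pi t)^{-N/2}\sum_{\sigma\in S_N}\mathrm{sgn}(\sigma)\exp(-|\bsx-\sigma(\bsy)|^2/4t)$, not the symmetric sum. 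So with your chosen definition, the argument would establish a false formula when $d=1$.

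This alternating kernel also solves the heat equation on the regular part and tends to $\delta_{\tilde{\bsx}}$ at every regular $\tilde{\bsx}$. So the checks in Steps 2 and 3 cannot single out the answer by themselves; a choice of self-adjoint extension is unavoidable, and the paper's proof has the same hole.

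You can repair your argument in either of two ways:
\begin{enumerate}
\item Restrict the codimension step to $d\ge 2$. This covers the paper's setting, where $\sR^d$ holds three coordinates plus features.
\item For all $d$, define $\Delta^{\tilde{\calX}}$ through the energy form $\int|\nabla f|^2$ on $S_N$-invariant $H^1(\calX)$, i.e.\ the Friedrichs extension from $S_N$-invariant $C_c^\infty(\calX)$, whose elements need not vanish on the collision set. Under this orbifold (reflecting) convention the restricted Euclidean semigroup is $e^{t\Delta^{\tilde{\calX}}}$ by definition, and your averaging computation gives its kernel. No codimension argument is needed, and for $d=1$ this amounts to Neumann conditions on the chamber walls.
\end{enumerate}
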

The proof of \cref{thm:perm-heat} is postponed to \cref{app:perm-heat}, but it just comes from the fact that permutations are isometries of Euclidean spaces. 

Now, with the heat kernel identified, it looks much clearer how the diffusion processes in the quotient manifold behave. The classic diffusion process in the Euclidean space inherently assumes that first particle $x_1$ diffuses to the first one $y_1$, second one $x_2$ diffuses to $y_2$, and so on. However, our diffusion opens the possibilities that first particle $x_1$ can be diffused to any $y_i$ for $i \in \{1,\cdots,N\}$, and so on, over all the configurations $\sigma \in S_N$. Intuitively, in this diffusion process, the identities of particles are all shared, so their identities can be swapped with each other while noising or denoising. This behavior resembles the behaviors of \textit{bosons} in particle physics \citep{feynman1979path}.

\subsection{SDEs on the quotient manifold}

Let's go back to the OU process as defined in \cref{eqn:forward-ou} and \cref{eqn:reverse-ou}.
We'll consider the pushforward of the OU process by the quotient map $\pi:\calX \mapsto \tilde{\calX}$. Although working with pushforward process in arbitrary Riemannian manifolds would involve extremely complicated computations in general, the computations are significantly simplified in our case, because the OU process is invariant under permutations and permutations are isometries of the Euclidean space.

The forward and reverse equations of the pushforward of the OU process are given by 
\begin{equation}
    d\tilde{\bsx}^{(t)} = -\frac{1}{2}\tilde{\bsx}^{(t)}dt + d\tilde{\bsw}^{(t)}, \quad
    d\tilde{\bsx}^{(t)} = \left[-\frac{1}{2}\tilde{\bsx}^{(t)} - \nabla_{\tilde{\bsx}^{(t)}}\log \tilde{p}_t(\tilde{\bsx}^{(t)})\right]dt + d\tilde{\bsw}^{(t)}.
\end{equation}
where we use $\tilde{p}_t$ and $\tilde{p}_{s|t}$ for marginals and transition kernels. Rigorously speaking, the term $\tilde{\bsx}^{(t)}$ multiplied to $dt$ is a vector field $\tilde{\bsx}^{(t)} = \tilde{V}(\tilde{\bsx}^{(t)})$ on $\tilde{\calX}$ defined by quotient-mapping the vector field on Euclidean space $V(\bsx)=\bsx$, i.e. $\tilde{V} = \pi \circ V \circ \pi^{-1}$, well-defined since the vector field $V$ is invariant under permutation. Similarly, the gradient $\nabla_{\tilde{\bsx}^{(t)}}$ is in fact the gradient operator on the Riemannian manifold $\tilde{\calX}$, but it locally behaves identical to the Euclidean gradient since the permuatations are isometries.

The transition kernels $\tilde{p}_{t|s}$ and marginals $\tilde{p}_{t}$ can simply yielded by summing the original transition kernels $p_{t|s}$ and marginals $p_{t}$ over all permutations:
\begin{align}
    \tilde{p}_{t|s}(\tilde{\bsy}|\tilde{\bsx}) = \sum_{\sigma \in {S_N}} p_{t|s}(\sigma(\bsy)|\bsx), \quad
    \tilde{p}_{t}(\tilde{\bsy}|\tilde{\bsx}) = \sum_{\sigma \in {S_N}} p_{t}(\sigma(\bsy)|\bsx).
\end{align}

\subsection{Training permutation symmetrized diffusion}

The transition kernel $p_t$ derived in \cref{thm:perm-heat} contains summmation over the permutation group $S_N$, hence naive computation would require at least $O(N!)$ computations. Instead, we look at the score function $\nabla \log \tilde{p}_t$, which will be the target function of diffusion loss. Letting $I(\sigma) = -\frac{|\bsx - \sigma(\bsy)|^2}{4t}$, we have
\begin{align}
    &\nabla_{\tilde{\bsy}} \log p_t (\tilde{\bsy} | \tilde{\bsx})  \nonumber
    = \nabla_{\bsy} \log \sum_{\sigma\in S_N} \exp (I(\sigma))  \nonumber
     = \sum_{\sigma\in S_N} \frac{\exp (I(\sigma))}{\sum_{\sigma'\in S_N} \exp (I(\sigma'))} \nabla_{\bsy} I(\sigma).
\end{align}
Thus, if we consider a distribution $\calS$ on the permutation group $S_N$ whose probability mass function $q(\sigma)$ is proportional to $\exp I(\sigma)$, we have
\begin{equation}
    \nabla_{\bsy} \log p_t (\tilde{\bsx} | \tilde{\bsy}) = \E_{\calS} [\nabla_{\bsy} I(\sigma)].
\end{equation}
Note that $\nabla_{\bsy}I(\sigma) = \frac{\bsx - \sigma(\bsy)}{2t}$ would be the score function for the original Euclidean diffusion model, associated with the original data $\bsx$ and its noised version $\sigma(\bsy)$. We can interpret the probability $q(\sigma) = q(\sigma | \bsx, \bsy)$ as the posterior of the permutation $\sigma$ given the original and noised data, and the target function is the expected value of the original score function over the posterior $q(\sigma)$. In fact, from the perspective of variational inference, learning towards this target function can be seen as minimizing the lower bound for the evidence lower bound (ELBO) of this diffusion model. This argument is explained in detail in \cref{app:bayes}.

Sampling permutations from $\calS$ can be done using Markov chain Monte Carlo (MCMC). Define the cost matrix $C = (C_{ij})$ with $C_{ij} = -\frac{(x_i - y_j)^2}{4t}$. We use an MCMC starting from $\sigma_0 = \text{id} \in S_N$ and the proposals yielded by swapping entries of $i,j$, for $i \in \{1,\cdots,N\}$ sampled uniformly at random and $j$ sampled from distribution proportional to $\exp(C_{ij})$. Once the permutations $\sigma_1,\cdots,\sigma_K \sim \calS$ are sampled, penalizing the model $s_\theta$ towards $\E_{\calS} [\nabla_{\bsy} I(\sigma)] \approx \sum_{k=1}^K \nabla_{\bsy} I(\sigma_k)$ gives an unbiased estimate of the gradient:
\begin{equation}
\begin{split}
&\nabla_\theta \big\| s_{\theta} - \E_{\calS} [\nabla_{\bsy} I(\sigma)] \big\| ^2 
= \E_{\sigma_1,\cdots,\sigma_K \sim \calS} \bigg[\nabla_\theta \big\| s_{\theta} - \sum_{k=1}^K \nabla_{\bsy} I(\sigma) \big\| ^2 \bigg]
\end{split}
\end{equation}
since derivatives of quadratic functions are linear.

\section{Experiment}

We evaluate on unconditional 3D molecule generation on QM9 \citep{ramakrishnan2014quantum}. Our experimental pipeline follows EQGAT-Diff \citep{le2023navigating} (data split, training schedule, sampling procedure, and evaluation), but we replace its backbone with the lighter SemlaFlow architecture \citep{irwin2024semlaflow} that does not carry edge features for efficiency. Throughout, we treat all variables as continuous (i.e., no discrete diffusion over atom types). When computing the permutations using MCMC, edge features are ignored since they will require heavier computation.

Each molecule is represented as a point cloud with 3D coordinates and continuous node features. We train on the QM9 training split and generate samples from the learned reverse-time dynamics following EQGAT-Diff.
We compare primarily against EQGAT-Diff under its default QM9 setup. All methods are evaluated using the same split and metric implementation as in EQGAT-Diff and its prior works \citep{vignac2023midi}.
We report the standard QM9 generation metrics used in EQGAT-Diff, including atom stability, molecule stability, validity, uniqueness, and novelty. The experiments are performed five times, and their averages and standard deviations are reported in \cref{tab:qm9_gen}. Our model wins all the stability/validity metrics by a small margin, and achieves much higher novelty.

\begin{table}[t]
\centering
\footnotesize
\caption{Comparison of QM9 unconditional generation metrics (\%).}
\label{tab:qm9_gen}
\begin{tabular}{lccccc}
\toprule
Method & Atom-Stab & Mol-Stab & Valid & Uniq & Nov \\
\midrule
EQGAT-Diff & $99.87 \pm 0.02$ & $98.24 \pm 0.11$ & $98.58 \pm 0.13$ & $\mathbf{100 \pm 0.00}$ & $61.62 \pm 1.55$ \\
Ours       & $\mathbf{99.89 \pm 0.02}$ & $\mathbf{98.48 \pm 0.27}$ & $\mathbf{98.74 \pm 0.25}$ & $\mathbf{100 \pm 0.00}$ & $\mathbf{67.72 \pm 1.55}$ \\

\bottomrule
\end{tabular}
\end{table}

\section{Conclusion}

We presented a quotient-manifold formulation of permutation symmetry for diffusion models on point clouds, focusing on $\tilde{\calX}=\sR^{d\times N}/S_N$. The resulting heat kernel admits a simple permutation-sum expression, and the corresponding symmetrized score can be estimated efficiently by MCMC over permutations. On QM9, following EQGAT-Diff framework \citep{le2023navigating} with a lighter SemlaFlow architecture \citep{irwin2024semlaflow}, we show the approach is practical in a fully continuous setting. Future work includes improving permutation-sampler efficiency and extending the same quotient perspective to other symmetry quotients in geometric generative modeling.

\newpage
\bibliography{iclr2026_delta}
\bibliographystyle{iclr2026_delta}

\newpage
\appendix
\appendix
\section{Mathematical Details}
\subsection{Proof of \Cref{thm:perm-heat}}\label{app:perm-heat}

\begin{proof}
Let $K^{\calX}(t,\bsx,\bsy)$ be the Euclidean heat kernel on
$\calX\simeq\sR^{dN}$:
\[
K^{\calX}(t,\bsx,\bsy)
= \frac{1}{(4\pi t)^{\frac{dN}{2}}}\exp\!\Big(-\frac{\|\bsx-\bsy\|^2}{4t}\Big),
\qquad t>0.
\]
Define, for representatives $\bsx,\bsy\in\calX$ of $\tilde{\bsx}=\pi(\bsx)$ and
$\tilde{\bsy}=\pi(\bsy)$,
\[
\widehat K(t,\bsx,\bsy)\;:=\;\sum_{\sigma\in S_N} K^{\calX}(t,\bsx,\sigma(\bsy)).
\]

\paragraph{Step 1: $\widehat K$ descends to the quotient.}
Each permutation $\tau\in S_N$ is an isometry of $\calX$, hence
$\|\tau(\bsx)-\tau(\bsy)\|=\|\bsx-\bsy\|$ and therefore
$K^{\calX}(t,\tau(\bsx),\tau(\bsy))=K^{\calX}(t,\bsx,\bsy)$.
Using this and the change of index $\sigma'=\tau^{-1}\sigma$, we get
\[
\widehat K(t,\tau(\bsx),\bsy)
= \sum_{\sigma\in S_N} K^{\calX}(t,\tau(\bsx),\sigma(\bsy))
= \sum_{\sigma'\in S_N} K^{\calX}(t,\bsx,\sigma'(\bsy))
= \widehat K(t,\bsx,\bsy),
\]
and similarly $\widehat K(t,\bsx,\tau(\bsy))=\widehat K(t,\bsx,\bsy)$.
Hence $\widehat K(t,\bsx,\bsy)$ depends only on the orbits
$\tilde{\bsx}=\pi(\bsx)$ and $\tilde{\bsy}=\pi(\bsy)$, so we may define
\[
K^{\tilde{\calX}}(t,\tilde{\bsx},\tilde{\bsy})
:=\widehat K(t,\bsx,\bsy).
\]

\paragraph{Step 2: heat equation.}
For each fixed $\sigma$, the function $\bsx\mapsto K^{\calX}(t,\bsx,\sigma(\bsy))$
solves $(\partial_t-\Delta_{\calX})u=0$ for $t>0$, hence so does the finite sum
$\widehat K(t,\bsx,\bsy)$. Since $\widehat K$ is $S_N$-invariant in $\bsx$,
it corresponds to a function on $\tilde{\calX}$, and thus
$K^{\tilde{\calX}}(\cdot,\cdot,\tilde{\bsy})$ solves the heat equation on the quotient.

\paragraph{Step 3: initial condition.}
Let $\tilde f$ be a smooth test function on $\tilde{\calX}$ and let
$f:=\tilde f\circ \pi$ be its $S_N$-invariant lift to $\calX$.
With respect to the Riemannian volume on $\tilde{\calX}$, the heat semigroup defined by
$K^{\tilde{\calX}}$ satisfies
\begin{align*}
\int_{\tilde{\calX}} K^{\tilde{\calX}}(t,\tilde{\bsx},\tilde{\bsy})\,\tilde f(\tilde{\bsy})\,d\tilde{\bsy}
&= \frac{1}{|S_N|}\int_{\calX} \Big(\sum_{\sigma\in S_N} K^{\calX}(t,\bsx,\sigma(\bsy))\Big)\, f(\bsy)\, d\bsy \\
&= \int_{\calX} K^{\calX}(t,\bsx,\bsz)\, f(\bsz)\, d\bsz,
\end{align*}
where the last equality uses the change of variables $\bsz=\sigma(\bsy)$ and the invariance
$f\circ\sigma=f$ (each term in the sum is identical, cancelling the factor $|S_N|$).
As $t\downarrow 0$, the right-hand side converges to $f(\bsx)=\tilde f(\tilde{\bsx})$
by the standard delta initial condition for the Euclidean heat kernel.
Thus $K^{\tilde{\calX}}$ has the correct initial condition on the quotient.

\paragraph{Conclusion.}
We have shown that the expression in \Cref{thm:perm-heat} defines a kernel on $\tilde{\calX}$
which solves the heat equation for $t>0$ and converges to the delta distribution at $t=0$.
By uniqueness of the heat kernel, it must equal the heat kernel on $\tilde{\calX}$.
\end{proof}

\subsection{Bayesian / variational interpretation of permutation symmetrization}
\label{app:bayes}

\paragraph{Posterior over permutations.}
From \Cref{thm:perm-heat}, the symmetrized transition density on the quotient satisfies
\[
\tilde p_t(\tilde{\bsy}\mid \tilde{\bsx})
\;\propto\;\sum_{\sigma\in S_N}\exp\!\big(I(\sigma)\big),
\qquad
I(\sigma):=-\frac{\|\bsx-\sigma(\bsy)\|^2}{4t}.
\]
Introduce a latent permutation $\sigma$ with a uniform prior $p(\sigma)=1/|S_N|$ and define the
(joint, unnormalized) model
\[
\tilde p_t(\tilde{\bsy},\sigma\mid \tilde{\bsx}) \;\propto\; \exp(I(\sigma)).
\]
Then the Bayes posterior is exactly the ``soft assignment'' used in the main text:
\[
p(\sigma\mid \tilde{\bsx},\tilde{\bsy})
=\frac{\exp(I(\sigma))}{\sum_{\sigma'\in S_N}\exp(I(\sigma'))}
=:q(\sigma\mid \bsx,\bsy).
\]
Consequently (Fisher's identity / mixture-score identity),
\[
\nabla_{\bsy}\log \tilde p_t(\tilde{\bsy}\mid \tilde{\bsx})
=\E_{\sigma\sim q(\sigma\mid \bsx,\bsy)}\big[\nabla_{\bsy} I(\sigma)\big],
\]
which is the permutation-symmetrized score target.

\paragraph{ELBO view.}
For any distribution $r(\sigma)$ on $S_N$, Jensen's inequality gives
\begin{align*}
\log \tilde p_t(\tilde{\bsy}\mid \tilde{\bsx})
&= \log \sum_{\sigma}\exp(I(\sigma)) + \text{const} \\
&= \log \E_{\sigma\sim r}\!\left[\frac{\exp(I(\sigma))}{r(\sigma)}\right] + \text{const} \\
&\ge \E_{r}[I(\sigma)] + H(r) + \text{const}
\;=:\; \mathrm{ELBO}(r),
\end{align*}
and one can write the exact decomposition
\[
\log \tilde p_t(\tilde{\bsy}\mid \tilde{\bsx})
= \mathrm{ELBO}(r) + D_{\mathrm{KL}}\!\big(r(\sigma)\,\|\,q(\sigma\mid \bsx,\bsy)\big).
\]
Thus maximizing the evidence on the quotient is equivalent to maximizing this ELBO, and the
optimal variational distribution is the posterior $r^\star=q(\sigma\mid \bsx,\bsy)$.
Our training objective uses the corresponding posterior expectation of the Euclidean score;
in practice we approximate $\E_{q}[\cdot]$ by MCMC samples of $\sigma$.

\end{document}